\newcommand{\ourtitle}{Classification with Conceptual Safeguards}
\author{%
    Hailey Joren\\
    UCSD
    \And
    Charlie Marx\\
    Stanford
    \AND
    Berk Ustun\\
    UCSD
}
    \ShortHeadings{\ourtitle}{Joren, Marx, Ustun}
\newcolumntype{H}{>{\setbox0=\hbox\bgroup}c<{\egroup}@{}}
\newcommand{\cell}[2]{\setlength{\tabcolsep}{0pt}\begin{tabular}{#1}#2 \end{tabular}}
\newtheorem{theorem}{Theorem}
\newtheorem{definition}[theorem]{Definition}
\newtheorem{assumption}{Assumption}
\algnewcommand{\alginput}[2]{\Statex{Input:~#1}\Comment{#2}}
\algnewcommand\algorithmicinput{\textbf{Input}}
\algnewcommand\algorithmicinitialize{\textbf{Initialize}}
\algnewcommand\algorithmicbigstep{\textbf{Step}}
\algnewcommand\INPUT{\item[\algorithmicinput]}
\algnewcommand\INITIALIZE{\item[\algorithmicinitialize]}
\algnewcommand{\STEP}[1]{\item[\algorithmicbigstep]{\textbf{#1}}}
\algnewcommand{\InputExplanation}[2][.6\linewidth]{\leavevmode\hfill\makebox[#1][r]{~{\footnotesize{#2}}}}
\algnewcommand{\InitializationExplanation}[2][.6\linewidth]{\leavevmode\hfill\makebox[#1][r]{~{\footnotesize{#2}}}}
\algnewcommand{\StateComment}[2]{\State{#1}\InputExplanation{#2}}
\algnewcommand{\alginitialize}[2]{\Statex{#1}\InitializationExplanation{#2}}
\algrenewcommand\algorithmiccomment[2][]{#1\hfill\textit{\scriptsize{#2}}}
\newcommand{\optpar}[2]{\ifthenelse{\isempty{#2}}{#1}{#1({#2})}}
\title{\ourtitle{}}
\author{%
Hailey Joren\\UC San Diego\\\texttt{hjoren@ucsd.edu}\\
\And
Charles Marx\\Stanford University\\\texttt{ctmarx@stanford.edu}\\
\And
Berk Ustun\\UC San Diego\\\texttt{berk@ucsd.edu}
}
\begin{document}

\iftoggle{icml}{
\icmltitle{\ourtitle{}}
\begin{icmlauthorlist}
\icmlauthor{Hailey Joren}{}
\icmlauthor{Charlie Marx}{}
\icmlauthor{Berk Ustun}{}
\end{icmlauthorlist}
}{}

\iftoggle{neurips}{\maketitle}{}
\iftoggle{iclr}{\maketitle}{}

\iftoggle{arxiv}{
\title{\ourtitle{}}
\author{%
\name Hailey Joren \email hjoren@ucsd.edu \\ \addr University of California, San Diego \AND
\name Charlie T. Marx \email ctmarx@stanford.edu \\ \addr Stanford University \AND
\name Berk Ustun \email berk@ucsd.edu \\ \addr University of California, San Diego%
}
\maketitle
}

\begin{abstract}
We propose a new approach to promote safety in classification tasks with established concepts. Our approach -- called a \emph{conceptual safeguard} -- acts as a verification layer for models that predict a target outcome by first predicting the presence of intermediate concepts. Given this architecture, a safeguard ensures that a model meets a minimal level of accuracy by abstaining from uncertain predictions. In contrast to a standard selective classifier, a safeguard provides an avenue to improve coverage by allowing a human to confirm the presence of uncertain concepts on instances on which it abstains. We develop methods to build safeguards that maximize coverage without compromising safety, namely techniques to propagate the uncertainty in concept predictions and to flag salient concepts for human review. We benchmark our approach on a collection of real-world and synthetic datasets, showing that it can improve performance and coverage in deep learning tasks.

\end{abstract}

\newcommand{\groupname}[1]{#1}
\definecolor{best}{HTML}{BAFFCD}
\definecolor{bad}{HTML}{FFC8BA}
\newcommand{\violation}[1]{\cellcolor{bad}#1} 
\newcommand{\good}[1]{{#1} }
\newcommand{\badvalue}[1]{{\color{red}{\textbf{#1}}}} 

\newcommand{\predictionthresholds}[0]{\cell{l}{$\tau=0.05$\\ $\tau=0.1$\\ $\tau=0.15$\\ $\tau=0.2$}}

\definecolor{fgood}{HTML}{BAFFCD}
\definecolor{fbad}{HTML}{FFC8BA}
\definecolor{funk}{HTML}{FFFFE0}
\definecolor{darkfunk}{HTML}{DDDD00}
\newcommand{\ftblmidrules}{\hline}
\newcommand{\ftblgap}{\quad}
\newcommand{\ftblheader}[3]{\multicolumn{#1}{#2}{\cell{#2}{#3}}}
\newcommand\setrow[1]{\gdef\rowmac{#1}#1\ignorespaces}
\newcommand\clearrow{\global\let\rowmac\relax}

\newcommand{\methodname}{conceptual safeguards}

\newcommand{\argmax}{\operatorname*{arg\,max}}
\newcommand{\argmin}{\operatorname*{arg\,min}}
\newcommand{\R}{\mathbb{R}}
\newcommand{\E}[1]{\mathbb{E}(#1)}
\newcommand{\indic}[1]{\mathbb{I}(#1)}
\renewcommand{\Pr}[1]{\textrm{Pr}(#1)}
\newcommand{\PrHat}[1]{\widehat{\textrm{Pr}}(#1)}
\newcommand{\intrange}[1]{[#1]}
\newcommand{\pypred}{\hat{p}_{y \mid \xb}}

\newcommand{\bigCI}{\mathrel{\text{\scalebox{1.07}{$\perp\mkern-10mu\perp$}}}}

\newcommand{\textds}[1]{{\texttt{\footnotesize{#1}}}} 
\newcommand{\textcp}[1]{{\texttt{\footnotesize{#1}}}} 
\newcommand{\textmethod}[1]{{\textsf{\footnotesize{#1}}}}

\newcommand{\X}{\mathcal{X}}
\newcommand{\xb}{\bm{x}}
\newcommand{\C}{\mathcal{C}}
\newcommand{\nconcepts}{m}
\newcommand{\cb}{\bm{c}}
\newcommand{\cbhat}{\hat{\bm{c}}}
\newcommand{\ctrue}{\bm{c}}
\newcommand{\Cpredproba}{\bm{\hat{C}}}
\newcommand{\cpredproba}{\bm{\hat{c}}}
\newcommand{\CHAT}{\widehat{\mathcal{C}}}

\newcommand{\Y}{\mathcal{Y}}
\newcommand{\YHAT}{\widehat{\mathcal{Y}}}
\newcommand{\ytrue}{y} 
\newcommand{\ypred}{\hat{y}} 
\newcommand{\missing}{?}

\newcommand{\fxc}[1]{g_{#1}} 
\newcommand{\Fset}{\mathcal{F}}
\newcommand{\Gset}{\mathcal{G}}

\newcommand{\truerisk}[2]{{R_{#1}(#2)}}
\newcommand{\emprisk}[2]{{\hat{R}_{#1}(#2)}}
\newcommand{\loss}{\ell}

\newcommand{\verify}{\pi}
\newcommand{\verifyset}{S}
\newcommand{\gate}{\varphi}
\newcommand{\intinds}{S}
\newcommand{\ip}{\pi}
\newcommand{\cintproba}{\bm{\bar{c}}}
\newcommand{\Cintproba}{\bm{\bar{C}}}
\newcommand{\cost}[1]{\gamma_{#1}}
\newcommand{\intcost}{\texttt{C}}
\newcommand{\allintinds}{\mathcal{S}}
\newcommand{\mccost}{\texttt{c}_\ell}
\newcommand{\thresh}{R^\textrm{max}}
\newcommand{\budget}{B}
\newcommand{\tol}{\alpha}
\newcommand{\threshold}{\tau}

\newcommand{\abstain}{\perp}
\newcommand{\absfun}[1]{\optpar{h}{#1}}
\newcommand{\absrate}[1]{A({#1})}
\newcommand{\abstcost}{\texttt{c}_\abstain}

\newcommand{\Parg}[1]{\mathrm{Pr}\left(#1\right)}
\newcommand{\Earg}[1]{\mathbb{E}\left[#1\right]}
\newcommand{\Esarg}[2]{\mathbb{E}_{#1}\left[#2\right]}
\newcommand{\probhat}{\hat{\mathrm{Pr}}}
\newcommand{\defeq}{:=}

\newcommand{\accuracy}{\mathrm{Accuracy}}
\newcommand{\coverage}{\mathrm{Coverage}}

\newcommand{\features}{\mathbf{x}}
\newcommand{\feature}{x}

\newcommand{\concepts}{\mathbf{c}}
\newcommand{\concept}{c}

\newcommand{\outcome}{y}

\newcommand{\featurespace}{\mathcal{X}}
\newcommand{\conceptspace}{\mathcal{C}}
\newcommand{\outcomespace}{\mathcal{Y}}
\newcommand{\reals}{\mathbb{R}}

\newcommand{\pipeline}{h}
\newcommand{\detector}{g}
\newcommand{\frontend}{f}
\renewcommand{\verify}{\psi}
\renewcommand{\gate}{\varphi}

\newcommand{\conceptspred}{\mathbf{q}}
\newcommand{\conceptpred}{q}

\newcommand{\verifieds}{\boldsymbol{p}}
\newcommand{\verified}{p}

\newcommand{\outcomepredproba}{\overline{y}}
\newcommand{\outcomepred}{\hat{\outcome}}

\section{Introduction}

One of the most promising applications of machine learning is to automate routine tasks that a human can perform. We can now train deep learning models to perform such tasks across applications -- be it to identify a bird in an image~\citep{tabak2019machine}, detect toxicity in text~\citep{kumar2021designing}, or diagnose pneumonia in a chest x-ray~\citep{yan2023robust}. Even as these models may outperform human experts ~\citep{he2015delving,zhou2021ensembled,chong2020detection}, their performance falls short of the levels we need to reap the benefits of full automation. A bird identification model that is 80\% accurate may not work reliably enough to be used in the field by conservationists. A pneumonia detection model with 95\% accuracy may still not be sufficient to eliminate the need for human oversight in a hospital setting. 

One strategy to reap some of the benefits of automation in such tasks is through \emph{abstention}. Given any model that is insufficiently accurate, we can measure the uncertainty in its predictions and improve its accuracy by abstaining from predictions that are too uncertain. In this way, we can improve accuracy by sacrificing \emph{coverage} -- i.e., the proportion of instances where a model assigns a prediction.  One of the common barriers to abstention in general applications is how to handle instances where a model abstains. In applications where we wish to automate a routine task, we would pass these to the human expert who would have made the decision in the first place. Thus, abstention represents a way to reap benefits from \emph{partial automation}.

In this paper, we present a modeling paradigm to ensure safety in such applications that we call a \emph{conceptual safeguard} (see ~\cref{Fig::SystemOverview}). A conceptual safeguard is an abstention mechanism for models that predict an outcome by first predicting a set of concepts. Given such a model, a conceptual safeguard operates as a verification layer -- estimating the uncertainty in each prediction and abstaining when it exceeds a threshold needed to ensure a minimal level of accuracy. Unlike a traditional selective classifier, a conceptual safeguard provides a way to improve coverage -- by allowing experts to confirm the presence of certain concepts on instances on which we abstain.


Although conceptual safeguards are designed to be a simple component that can be implemented off the shelf, designing methods to learn them is challenging. On the one hand, concepts introduce a degree of uncertainty that we must account for at test time. In the best case, we may have to abstain too much to achieve a desired level of accuracy. In the worst case, we may fail to hit the mark. On the other hand, we must build systems that are \emph{designed for confirmation} -- i.e., where we can reasonably expect that confirming concepts will improve coverage and where we can rank the concepts in a way that improves coverage. Our work seeks to address these challenges so that we can reap the benefits of these models. 

\paragraph{Contributions}
Our main contributions include:
\begin{enumerate}[leftmargin=*]

    \item We introduce a general-purpose approach for safe automation in classification tasks with concept annotation. Our approach can be applied using off-the-shelf supervised learning techniques.
    
    \item We develop a technique to account for concept uncertainty in concept bottleneck models. Our technique can use native estimates from the components of a concept bottleneck to return reliable estimates of label uncertainty, improving the accuracy-coverage trade-off in selective classification.
    
    \item We propose a confirmation policy that can improve coverage. Our policy flags concepts in instances on which a model abstains, prioritizing high-value concepts that could resolve abstention. This strategy can be readily customized to conform to a budget and applied offline.

    \item We benchmark conceptual safeguards on classification datasets with concept labels. Our results show that safeguards can improve accuracy and coverage through uncertainty propagation and concept confirmation. 
    
\end{enumerate}

\newcommand{\textfn}[1]{{\scriptsize\texttt{#1}}}

\begin{figure}[t!]
    \centering
    \includegraphics[page=1,width=\textwidth]
    {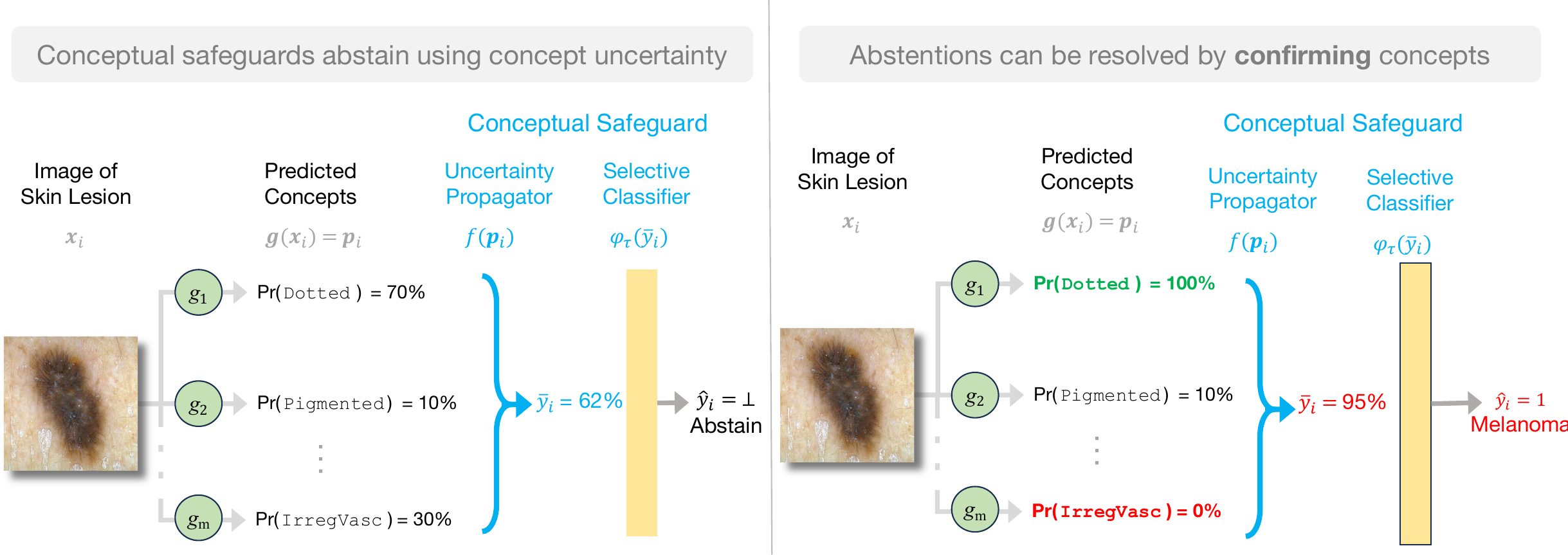}
    \caption{Conceptual safeguard to detect melanoma from an image of a skin lesion. We consider a model that estimates the probabilities of $m$ concepts: \textfn{Dotted}, \textfn{Pigmented} \ldots \textfn{IrregularVasc}.
    Given these probabilities, a conceptual safeguard will decide whether to output a prediction $\hat{y} \in \{\textfn{Melanoma}, \textfn{NoMelanoma}\}$ or to abstain $\hat{y} = \perp$.
    The safeguard improves accuracy by abstaining on images that would receive a low confidence prediction, and measures confidence in a way that accounts for the uncertainty in concept predictions through uncertainty propagation.
    On the left, we show an image where a safeguard abstains because its confidence fails to meet the threshold to ensure high accuracy $\Pr{\textfn{Melanoma}} = 62\% \leq 90\%$.
    On the right, we show a human expert can resolve the abstention by confirming the presence of concepts $\textfn{Dotted}$ and $\textfn{IrregVasc}$ in the image. 
    }
    \label{Fig::SystemOverview}
\end{figure}

\subsection*{Related Work}

\paragraph{Selective Classification}

Our work is related to a large body of work on machine learning with a reject option~\citep[see e.g.,][for a recent survey]{hendrickx2021machine}, and more specifically methods for selective classification ~\citep{chow1957optimum,chow1970optimum,fumera2000reject,franc2023optimal}. Our goal is to learn a selective classifier that assigns as many predictions as possible while adhering to a minimal level of accuracy~\citep[i.e., the bounded abstention model of][]{franc2023optimal}. We build conceptual safeguards for this task through a \emph{post-hoc} approach~\citep[see e.g.,][]{geifman2017selective} -- in which we are given a model, and build a verification layer that estimates the confidence of each prediction and abstains on predictions where a model is insufficiently confident. The key challenge in our approach is that we require reliable estimates of uncertainty to abstain effectively~\citep{charoenphakdee2021classification, fisch2022calibrated}, which we address by propagating the uncertainty in concept predictions to the uncertainty in the final outcome.

\paragraph{Deep Learning with Concepts}

Our work is related to a stream of work on deep learning with concept annotations. A large body of work uses these annotations to promote interpretability --  either by explaining the predictions of DNN in terms of concepts that a human can understand~\citep[][]{kim2018interpretability,chen2020concept}, or by building \emph{concept bottleneck models} -- i.e., a model that predicts an outcome of interest by predicting concepts -- i.e., ~\citep[][]{koh2020concept}. 
One of the key motivations for concept bottleneck models is the potential for humans to intervene at test time~\cite[][]{koh2020concept} -- e.g., to improve performance by correcting a concept prediction that was incorrectly detected.
Recent work shows that many architectures that perform well cannot readily support interventions~\cite[see e.g.,][]{havasi2022addressing, margeloiu2021concept}.%
\footnote{For example, if we build a sequential architecture -- wherein we train a front-end model to predict the label from \emph{predicted concepts} -- then interventions may reduce accuracy as the front-end may rely on an incorrect concept prediction to output accurate label predictions.}
Likewise, interventions may not be practical in applications where we wish to automate a routine task. In such cases, we need a mechanism to flag predictions for human review to prevent human experts from having to check each prediction. Our work highlights several avenues to avoid these limitations. In particular, we work with an independent architecture that is amenable to interventions, consider a restricted class of interventions, and present an approach that does not require constant human supervision.

One overarching challenge in building concept bottleneck models is the need for concept annotations. In effect, very few datasets include concept annotations and those that do are often incomplete (e.g., an example may be missing some or all concept labels). In practice, the lack of concept labels can limit the applicability and the performance of concept bottleneck models -- and has motivated work a recent stream of work on machine-driven concept annotation~\citep[see e.g.,][]{oikarinen2023label} and drawing on alternate sources of information~\citep{mertk2022posthoc,yeh2020completeness}. Our work outlines an alternative approach to overcome this barrier to adoption: rather than building a new model that is sufficiently accurate, use it as much as possible by allowing it to abstain from prediction.



\newcommand{\yes}{\ding{51}}
\newcommand{\no}{\ding{55}}

\section{Framework}
\label{Sec::ProblemStatement}

We consider a classification task to predict a label from a complex feature space. We start with a dataset of $n$ i.i.d. training examples $\{(\features_i, \concepts_i, \outcome_i)\}_{i=1}^n$, where example $i$ consists of:
\begin{itemize}[leftmargin=*,itemsep=0em]
    \item a vector of \emph{features} $\features_{i} \in \X \subseteq \R^d$ --  e.g., $\features_{i}$ pixels in image $i$;
    \item a vector of $k$ \emph{concepts} $\concepts_{i} \in \C = \{0, 1\}^m$ --  e.g., $c_{i,k} = 1$ if x-ray $i$ contains a bone spur;
    \item a \emph{label} $\outcome_{i} \in \Y = \{0, 1\}$ --  e.g., $\outcome_{i} =1$ if patient $i$ has arthritis.
\end{itemize}

\paragraph{Objective}

We use the dataset to build a selective classification model $\pipeline: \X \to \Y \cup \{\abstain\}.$ Given a feature vector $\features_{i}$, we denote the output of the model as $\outcomepred_{i} \defeq \pipeline(\features_{i})$ where $\outcomepred_{i} = \abstain$ denotes that the model \emph{abstains from prediction} for $\xb_i$. 

In the context of an automation task, we would like our model to assign as many predictions as possible while adhering to a target accuracy to ensure safety at test time. In this setup, abstention reflects a viable path to meet this constraint -- by allowing the model to abstain on instances where it would otherwise assign an incorrect prediction. Given an \emph{target accuracy} $\alpha \in (0, 1)$, we express these requirements as an optimization problem:
\begin{align*}
\begin{split}
    \max_{h} \quad &\coverage(\pipeline)\\
    \textrm{s.t.} \quad & \accuracy(\pipeline) \geq \alpha,
\end{split}
\end{align*}
where:
\begin{itemize}[leftmargin=*,itemsep=0pt]
\item $\coverage(\pipeline) := \Parg{\outcomepred \neq \abstain}$ is the \emph{coverage} of the model $\pipeline{}$ -- i.e., proportion of instances where $\pipeline$ outputs a prediction.

\item $\accuracy(\pipeline) := \Parg{\outcome = \outcomepred \mid \outcomepred \neq \abstain}$ is the \emph{selective accuracy} of the model $\pipeline{}$ -- i.e., the accuracy of $h$ over instances on which it outputs a prediction 
\end{itemize}

\paragraph{System Components}

We consider a selective classifier with the components shown in \cref{Fig::SystemOverview}. Given the dataset, we will first train the basic components of an independent concept bottleneck model:
\begin{itemize}[leftmargin =*]

    \item A \emph{concept detector} $\detector: \X \to [0, 1]^m$. Given features $\xb_i$, the concept detector returns as output a vector of $m$ probabilities $\conceptspred_i = [\conceptpred_{i, 1},\ldots, \conceptpred_{i, m}]\in [0, 1]^m$ where each $\conceptpred_{i, k} := \Pr{\concept_{i, k} = 1 \mid \xb_i}$ captures the probability that concept $k$ is present in instance $i$.

    \item A \emph{front-end model} $\frontend: \C \to \Y$, which takes as input a vector of \emph{hard} concepts $\concepts_i$ and returns as output the outcome probability $\outcomepredproba_i \defeq \frontend(\concepts_i)$.

\end{itemize}
%
%
In what follows, we treat these models as fixed components that are given and assume that they satisfy two requirements: (i) \emph{independent training}, meaning that we train the concept detector and the front-end model using the datasets $\{(\features_i, \concepts_i)\}_{i=1}^n$ and $\{(\concepts_i, \outcome_i)\}_{i=1}^n$, respectively; (ii) \emph{calibration}, i.e., that the concept detector and front-end model and return calibrated probability estimates of their respective outcomes. As we will discuss shortly, independent training will be essential for confirmation while calibration will be essential for abstention. 

In practice, these are mild requirements that we can satisfy using off-the-shelf methods for supervised learning. Given a dataset, for example, we can fit the concept detectors using a standard deep learning algorithm, and fit the front-end model using logistic regression. In settings where the dataset is missing concept labels for certain examples, we can train a separate detector for each concept to maximize the amount of data for each concept detector. In settings where we have an embedding model~\citep[][]{mertk2022posthoc} or an existing end-to-end deep learning model, we can dramatically reduce the computation by training the concept detectors via fine-tuning. In all cases, we can calibrate each model by applying a post-hoc calibration technique, e.g., Platt scaling \citep{platt1999probabilistic}.


\begin{figure}[!t]
\centering
\includegraphics[page=2,trim=4.6in 0.5in 6.2in 0.73in,clip,width=0.3\textwidth]{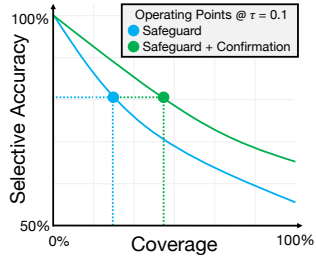}
\caption{We evaluate the performance of classification models that can abstain using an accuracy-coverage curve~\citep[][]{franc2023optimal}. Given a model that outputs a probability prediction for each point, a conceptual safeguard flags points on which a model abstains based on a confidence threshold $\threshold{} \in [0, 0.5]$ -- where setting $\threshold$ = 0 leads to 100\% coverage and setting $\threshold = 0.5$ leads to 0\% coverage.
}
\label{Fig::PerformanceCoverageCurve}
\end{figure}

\paragraph{Conceptual Safeguards}

Conceptual safeguards operate as a confidence-based selective classifier -- abstaining on points where they cannot assign sufficiently confident prediction. 
We control this behavior through an internal component called the \emph{selection gate} $\gate_\threshold: [0,1] \to \Y \cup \{ \abstain\}$, which is parameterized with a \emph{confidence threshold} $\threshold \in [0, 1]$. Given a threshold $\threshold$, the gate takes as input a soft label $\outcomepredproba_i \in [0,1]$ and returns:
\begin{align*}
\outcomepred_{i} = \gate_\threshold(\outcomepredproba_i) = %
\begin{cases} %
0 & \text{if } \quad \outcomepredproba_i \in [0, \threshold)\\
\abstain & \text{if } \quad \outcomepredproba_i \in [\threshold, 1-\threshold] \\ 
1 & \text{if } \quad \outcomepredproba_i \in (1 - \threshold,1] \\ %
\end{cases}
\end{align*}
As shown in \cref{Fig::PerformanceCoverageCurve}, we can set $\threshold$ to choose an operating point for the model on the accuracy-coverage curve -- e.g., to meet a desired target accuracy rate at by sacrificing coverage. In settings where our front-end model returns calibrated probability predictions, we can use them to set the threshold $\threshold$.
\begin{definition}\normalfont
A probabilistic prediction $\outcomepredproba\in [0, 1]$ for a binary label $y \in \{0,1\}$ is \emph{calibrated} if $\Parg{y = 1 \mid \outcomepredproba = t} = t$ for all values $t \in [0, 1].$
\end{definition}

\begin{restatable}{prop}{SelectiveClassificationProp}
Suppose that $\outcomepredproba$ is a calibrated probability prediction for $y$. Then any selective classifier $\gate_\threshold(\outcomepredproba)$ that abstains when $\outcomepredproba$ has confidence below $1 -\threshold$ achieves accuracy at least $1 - \threshold$.
\label{prop::accuracy}
\end{restatable}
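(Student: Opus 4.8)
The plan is to evaluate the selective accuracy directly by conditioning on the realized score $\outcomepredproba$ and exploiting the fact that both the emitted label and the abstention decision are deterministic functions of $\outcomepredproba$. First I would introduce the non-abstention event $E \defeq \{\outcomepred \neq \abstain\}$, which by the definition of $\gate_\threshold$ splits into the two pieces $\{\outcomepredproba < \threshold\}$ (on which $\outcomepred = 0$) and $\{\outcomepredproba > 1 - \threshold\}$ (on which $\outcomepred = 1$). Because $\indic{E}$ and $\outcomepred$ are both $\sigma(\outcomepredproba)$-measurable, the tower property gives $\Parg{y = \outcomepred,\, E} = \Earg{\indic{E}\,\Parg{y = \outcomepred \mid \outcomepredproba}}$, which reduces the claim to a pointwise lower bound on the conditional probability of correctness over $E$.

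The crux is this pointwise bound, which follows immediately from calibration. On the right piece, where $\outcomepredproba = t > 1 - \threshold$ and $\outcomepred = 1$, calibration yields $\Parg{y = \outcomepred \mid \outcomepredproba = t} = \Parg{y = 1 \mid \outcomepredproba = t} = t > 1 - \threshold$. On the left piece, where $\outcomepredproba = t < \threshold$ and $\outcomepred = 0$, calibration yields $\Parg{y = \outcomepred \mid \outcomepredproba = t} = 1 - t > 1 - \threshold$. Hence $\Parg{y = \outcomepred \mid \outcomepredproba} \geq 1 - \threshold$ everywhere on $E$, with the two prediction branches handled symmetrically by the single quantity $1 - \threshold$.

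Finally I would aggregate over $E$: combining the displayed identity with the pointwise bound gives $\Parg{y = \outcomepred,\, E} \geq (1 - \threshold)\,\Earg{\indic{E}} = (1 - \threshold)\,\Parg{E}$, and dividing by the coverage $\Parg{E} = \coverage(\pipeline)$ yields $\accuracy(\pipeline) = \Parg{y = \outcomepred \mid E} \geq 1 - \threshold$. The argument is short, so the only real care needed is the measure-theoretic bookkeeping in the tower-property step—ensuring that $\indic{E}$ and $\outcomepred$ factor out because they depend on the data only through $\outcomepredproba$—together with treating the symmetric $\outcomepred = 0$ and $\outcomepred = 1$ branches so that one bound covers both. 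A minor caveat worth flagging is the degenerate case $\Parg{E} = 0$, where coverage vanishes and the accuracy constraint holds vacuously.
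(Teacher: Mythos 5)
Your proof is correct and follows essentially the same approach as the paper: split the non-abstention event into the two prediction branches ($\outcomepredproba < \threshold$ predicting $0$, $\outcomepredproba > 1-\threshold$ predicting $1$), apply calibration on each branch to lower-bound the conditional correctness by $1-\threshold$, and recombine. Your version merely makes explicit what the paper's law-of-total-probability step leaves implicit (the tower-property reduction to a pointwise bound on $\Parg{y = \outcomepred \mid \outcomepredproba}$), and it adds the harmless caveat about zero coverage, where the claim holds vacuously.
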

\cref{prop::accuracy} is a standard result ensuring high accuracy for selective classifiers that output calibrated probabilistic prediction (see \cref{app:proofs} for a proof). In cases where the front-end model is not calibrated, the result will hold only given the degree of calibration. Thus, the reliability of this approach hinges on the reliability of our confidence estimates. An alternative approach for such cases is to treat the output of the front-end model $f$ as a \emph{confidence score} and tune $\threshold$ using a calibration dataset~\citep[see, e.g., the SGR algorithm of][]{geifman2017selective}.

\paragraph{Confirmation} 

We let users \emph{confirm} the presence of concepts on instances where a model abstains. In a pneumonia detection task, for example, we can ask a radiologist to confirm the presence of concept $k$ in x-ray $i$. We refer to this procedure as confirmation rather than intervention to distinguish it from other ways where a human expert would alter the output from a concept detector.%
\footnote{For example, intervention may refer to ``correction'' in which a human replaces a hard concept prediction with its correct value.}

We consider tasks where each concept is \emph{human-verifiable} -- i.e., that it can be detected correctly by the human experts that we would query to confirm~\citep[c.f., concepts where a human may be uncertain as in][]{collins2023human}. In such tasks, confirming a concept will replace its probability $\conceptpred_{i,k}$ to its ground-truth value $c_{i,k} \in \{0,1\}$. We cast confirmation as a \emph{policy} function $\verify_\verifyset : [0, 1]^m \to [0, 1]^m$ where $\verifyset \subseteq \intrange{m}$ denotes a subset of concepts to confirm. The function takes as input a vector of raw concept predictions and returns a vector of partially confirmed concept predictions: $\verifieds_i = [\verified_{i, 1}, \ldots, \verified_{i, m}] \in [0, 1]^m$ where:
\begin{align*}
	\verified_{i, k} \defeq 
	\begin{cases}%
    \concept_{i, k} & \text{if } k \in \verifyset\\	
    \conceptpred_{i, k} & \text{if } k \notin \verifyset
	\end{cases} %
\end{align*}

\section{Methodology}
\label{Sec::Methodology}

In this section, we describe how to build the internal components of a conceptual safeguard. We first discuss how to output reliable predicted probabilities given uncertain inputs at test time. We then introduce a technique to flag promising examples that can be confirmed to improve coverage.




\subsection{Uncertainty Propagation}

It is important that the concept detectors are probabilistic, so that we can prioritize confirming concepts that have high uncertainty. However, this creates an issue where the concept detectors output probabilities, but the front-end model requires hard concepts as inputs. 

Here, we describe a simple strategy wherein we use \emph{uncertainty propagation} to allow the front-end model to accept probabilities rather than hard concepts as input. We make two assumptions about the underlying data distribution to motivate our approach.

\begin{assumption} 
\label{assum:completeness}
The label $y$ and features $\xb$ are conditionally independent given the concepts $\cb$.
\end{assumption}

\begin{assumption}
\label{assum:separability}
The concepts $\{c_1, \dots, c_m\}$ are conditionally independent given the features $\xb$.
\end{assumption}
We can use these assumptions to write the conditional label distribution $p(y \mid \xb)$ in terms of quantities that we can readily obtain from a concept detector and front-end model: 
\begin{align}
\label{eq:use-assum}
    p(y \mid \xb) & = \sum_{\concepts \in \{0, 1\}^m}  p(y \mid \cb, \xb) \, p(\concepts  \mid\features) \\ 
    & = \sum_{\concepts \in \{0, 1\}^m} p(y \mid \concepts) \, p(\cb \mid \features{}) \tag{\cref{assum:completeness}} \\ 
    & = \sum_{\concepts \in \{0, 1\}^m} \textcolor{gray}{\underbrace{\textcolor{black}{p(y \mid \cb)}}_\text{\parbox{2cm}{\centering{output from $f(\concepts)$}}}} \prod_{k \in \intrange{m}} \textcolor{gray}{\underbrace{\textcolor{black}{p(c_k \mid \xb)}}_\text{\parbox{2cm}{\centering{output from $g_k(\features)$}}}}  \tag{\cref{assum:separability}}
\end{align}
We use this decomposition to propagate uncertainty from the inputs of the front-end model to its output. Specifically, we compute the expected prediction of the front-end model on all possible realizations of hard concepts and weigh each realization in terms of the probabilities from concept detectors. In practice, we replace each quantity in \cref{eq:use-assum} with an estimate computed by the concept detectors and front-end model. Given predicted probabilities from concept detectors $\verifieds_i = (\verified_{i, 1}, \dots, \verified_{i, k})$, we compute the estimate of $p(y \mid \xb)$ as:
\begin{align}
    f(\verifieds_i) \defeq \sum_{\cb \in \{0, 1\}^m} f(\cb) \prod_{k \in \intrange{m}} \verified_{i, k}^{c_k} (1 - \verified_{i, k})^{1 - c_k}
    \label{Eq::Propagation}
\end{align}

\begin{align}
\textcolor{gray}
{\underbrace{{\textcolor{black}{f(\verifieds_i)}}}_\text{\parbox{2cm}{\centering{output using uncertain concepts}}}} \defeq \sum_{\concepts \in \{0, 1\}^m} \textcolor{gray}{\underbrace{\textcolor{black}{p(y \mid \cb)}}_\text{\parbox{2cm}{\centering{output using hard concepts}}}} \prod_{k \in \intrange{m}} \textcolor{gray}{\underbrace{\textcolor{black}{p(c_k \mid \xb)}}_\text{\parbox{2cm}{\centering{output from concept detectors}}}}  
\end{align}
Here, we abuse notation slightly and use $f(\verifieds_i)$ to denote the front-end model applied to uncertain concepts, and $f(\concepts)$ to denote the front-end model applied to hard concepts.
This requires $2^m$ calls to the front-end model, which is negligible in practice as most front-end models are trained with a limited number of concepts. In settings where $m$ is large, or computation is prohibitive, we can use a sample of concept vectors to construct a Monte Carlo estimate.

\subsection{Confirmation}

Selective classification guarantees higher accuracy at the cost of potentially lower coverage. In what follows, we describe a strategy that can mitigate the loss in confirmation by human confirmation -- i.e., manually spotting concepts among instances on which we abstain. In principle, confirmation will always lead to an improvement in coverage. In practice, human confirmation is labor intensive -- and may require expertise -- so we want to develop techniques that can account by flagging promising examples that are responsive to confirmation costs.

\begin{algorithm}[!t]
\begin{algorithmic}[1]
\caption{Greedy Concept Selection} 
\label{Alg::GreedyConceptSelection}
\Require $\{i \in [n] \mid \gate_\threshold(\frontend(\conceptspred_i)) = \, \abstain\}$ instances on which a model abstained
\Require $\cost{1},\ldots,\cost{m} > 0$, cost to confirm each concept
\Require$\budget > 0$, confirmation budget
\State{$S_1,\ldots,S_n \gets \{\}$} \Comment{concepts to confirm for each instance}
\Repeat{}
\Statex $i^*, k^* \gets \argmax_{i,k}\; \textrm{Gain}(\conceptspred_i, k) ~\textrm{s.t.}~ k \not\in S_i$
\Comment{select best remaining concept}
\State $S_{i^*} \gets S_{i^*} \cup \{k^*\}$
\State $B \gets B - \cost{k^*}$
\Until{$B < 0$ or $S_i = \intrange{m}$ for all $i \in [n]$}
\Ensure{$S_1,\ldots,S_n$, concepts to confirm for each abstained instance}
\end{algorithmic}
\end{algorithm}

In \cref{Alg::GreedyConceptSelection}, we present a routine to flag concepts for a human expert to review among instances on which a model abstains. The routine iterates over a batch of $n$ points on which the model has abstained and identifies salient concepts that can be confirmed by a human expert to avoid abstention. 

\cref{Alg::GreedyConceptSelection} computes the gain associated with confirming each concept on each instance and then returns the concepts with the maximum gain while adhering to a user-specified \emph{confirmation budget} $B > 0$. We associate the cost of confirming each concept $k$ with a cost $\cost{k} > 0$, which can be set to control the time or expertise that is associated with each confirmation. The routine selects concepts for review based on the expectation of the gain in certainty. We measure the gain in certainty in terms of the variance of the prediction after confirmation:
\begin{align}
\begin{split}
    \textrm{Gain}(\conceptspred,k) 
    & = \mathrm{Var}_{\verified_k \sim \mathrm{Bern}(\conceptpred_k)} \left[\frontend(\verify_{\{k\}}(\conceptspred))\right] 
    \\&= q_k(1 - q_k)\left( f(\conceptspred[\conceptpred_k\gets 1]) - f(\conceptspred[\conceptpred_k\gets 0]) \right)^2
\end{split}\label{Eq::Gain}
\end{align}

\begin{align}
\textcolor{gray}
{\underbrace{{\textcolor{black}{\textrm{Gain}(\conceptspred,k)}}}_\text{\parbox{2cm}{\centering{gain from confirming concept}}}} \defeq q_k(1 - q_k) \textcolor{gray}{\underbrace{\textcolor{black}{f(\conceptspred[\conceptpred_k\gets 1])}}_\text{\parbox{2cm}{\centering{outcome if concept present}}}} - \textcolor{gray}{\underbrace{\textcolor{black}{f(\conceptspred[\conceptpred_k\gets 0])}}_\text{\parbox{2cm}{\centering{outcome if concept absent}}}}  
\end{align}
The gain measure in \eqref{Eq::Gain} captures the sensitivity of predictions from the front-end model by confirming concept $k$. In particular, we seek to identify concepts that -- if confirmed -- would induce a large change in the output of the front-end and thus resolve abstentions. Given that we do not know the underlying value of concept $k$ prior to confirmation, we treat $\conceptpred_k$ as a random variable that will be set to $\conceptspred[\conceptpred_k \gets 1]$ with probability $\conceptpred_k$ and $\conceptspred[\conceptpred_k \gets 0]$ with probability $1-\conceptpred_k$. Here, $\conceptspred[\conceptpred_k \gets 1]$ refer to the vector $\conceptspred$, except with $\conceptpred_k$ replaced with $1$. 

\newcommand{\addplot}[1]{\adjustbox{valign=c}{%
\includegraphics[trim={0 0 0 0.25in},clip, height=0.2\textheight]{figures/2024/vjan/tables/subfigures/#1}}}

\newcommand{\addsideplot}[1]{\adjustbox{valign=c}{%
\includegraphics[trim={0.25in 0 0 0.25in},clip, height=0.2\textheight]{figures/2024/vjan/tables/subfigures/#1}}}

\newcommand{\addlegend}{\includegraphics[trim={0.25in 0.2in 0.2in 0.2in},clip,width=\linewidth]{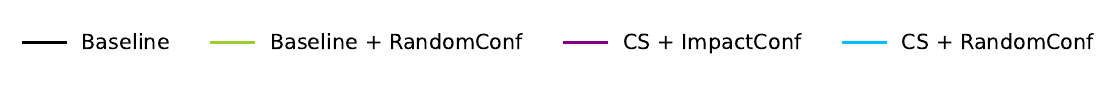}}
\newcommand{\syntheticlowinfo}{\adjustbox{valign=c}{\parbox{0.25\textwidth}{\textds{noisyconcepts25}\\ $n=100,000$\\$m=3$\\}}}
\newcommand{\synthetichighinfo}{\adjustbox{valign=c}{\parbox{0.25\textwidth}{\textds{noisyconcepts75}\\ $n=100,000$\\$m=3$\\}}}




\section{Experiments}
\label{Sec::Experiments}

We present experiments where we benchmark conceptual safeguards on a collection of real-world classification datasets. Our goal is to evaluate their accuracy and coverage trade-offs, and to study the effect of uncertainty propagation and confirmation through ablation studies. We include details on our setup and results in \cref{app::experiments}, and provide code to reproduce our results on \href{\repourl}{GitHub}.

\subsection{Setup}
We consider six classification datasets with concept annotations:
\begin{itemize}[leftmargin=*, itemsep=0.1em]
\item The \textds{melanoma} and \textds{skincancer} datasets are image classification tasks to diagnose melanoma and skin cancer derived from the Derm7pt dataset~\citep{derm7pt}. 

\item The \textds{warbler} and \textds{flycatcher} datasets are image classification tasks derived from the CalTech-UCSD Birds dataset~\citep{cub} to identify different species of birds. 

\item The \textds{noisyconcepts25} and \textds{noisyconcepts75} datasets are synthetic classification tasks designed to control the noise in concepts (see \cref{Appendix::Datasets}).

\end{itemize}

We process each dataset to binarize categorical concepts (e.g., \textcp{WingColor} to \textcp{WingColorRed}). We split each dataset into a training sample (80\%, used to build a selective classification model) and a test sample (20\%, used to evaluate coverage and selective accuracy in deployment). 

\paragraph{Models} 
We train a selective classification model using one of the following methods:
\begin{itemize}[leftmargin=*, itemsep=0.25em]

    \item \textmethod{X$\to$Y MLP}: A multilayer perceptron trained on top of the penultimate layer of the embedding model. This baseline represents an end-to-end deep learning model that directly predicts the output without concepts.
    
    \item \textmethod{Baseline}: An independent concept bottleneck model built from concept detectors $\fxc{1},\ldots, \fxc{\nconcepts{}}$ and a front-end model $\frontend$ trained to predict the true concepts. 

    \item \textmethod{CS}: Conceptual safeguard built from the same concept detectors and front-end as the baseline. This model propagates the uncertainty from the concept detectors $\fxc{1},\ldots, \fxc{\nconcepts{}}$ to the front-end model $\frontend$ as described in \cref{Sec::Methodology}. 
    
\end{itemize}
We build \textmethod{Baseline} and \textmethod{CS} models using the same front-end model $\frontend$ and concept detectors $\detector_1,\ldots,\detector_{\nconcepts}$. We train the front-end model $\frontend$ using logistic regression, and the concept detectors using embeddings from a pre-trained model~\citep[i.e., InceptionV3,][]{inceptionv3} (for all datasets other than \textds{noisyconcepts}). 

\paragraph{Evaluation} 
We report the performance of each model through an \emph{accuracy-coverage curve} as in \cref{Fig::PerformanceCoverageCurve}, which plots its coverage and selective accuracy on the test sample across thresholds. 
We evaluate the impact of confirmation in concept based models -- i.e., \textmethod{Baseline} and \textmethod{CS} --  in terms of the following policies: 
\begin{itemize}[leftmargin=*, itemsep=0.25em]
\item \textmethod{ImpactConf}, which flags concepts to review using \cref{Alg::GreedyConceptSelection}; 
\item \textmethod{RandomConf}, which flags a random subset of concepts to review. 
\end{itemize}
We control the number of examples to confirm by setting a \emph{confirmation budget}, and plot accuracy-coverage curves for confirmation budgets of 0/10/20/50\% to show how performance changes under no/low/medium/high human levels of human intervention, respectively.

\newcommand{\cubtypesinfo}{\adjustbox{valign=c}{\parbox{0.25\textwidth}{\textds{cubtypes}\\ $n=5,994\\m=112$ \\ \citet{cub}}}}

\newcommand{\cubspeciesinfo}{\adjustbox{valign=c}{\parbox{0.25\textwidth}{\textds{cubspeciesinfo}\\ $n=5,994\\m=112$ \\ \citet{cub}}}}

\newcommand{\warblerinfo}{\adjustbox{valign=c}{\parbox{0.25\textwidth}{\textds{warbler}\\ $n=5,994\\m=112$ \\ \citet{cub}}}}

\newcommand{\flycatcherinfo}{\adjustbox{valign=c}{\parbox{0.25\textwidth}{\textds{flycatcher}\\ $n=5,994\\m=112$ \\ \citet{cub}}}}

\newcommand{\cubcommoninfo}{\adjustbox{valign=c}{\parbox{0.25\textwidth}{\textds{cubcommon}\\ $n=5,994\\m=112$ \\ \citet{cub}}}}

\newcommand{\cubrareinfo}{\adjustbox{valign=c}{\parbox{0.25\textwidth}{\textds{cubrare}\\%
$n=5,994\;\\m=112$\\\citet{cub}}}}

\newcommand{\melanomainfo}{\adjustbox{valign=c}{\parbox{0.25\textwidth}{\textds{melanoma}\\$n=616\;\\m=17$\\\citet{derm7pt}}}}

\newcommand{\skincancerinfo}{\adjustbox{valign=c}{\parbox{0.25\textwidth}{\textds{skincancer}\\$n=616\;\\m=17$\\\citet{derm7pt}}}}

\begin{table}[t!]
\centering

\setlength{\tabcolsep}{0pt}
\resizebox{\linewidth}{!}{
\begin{tabular}{lcccc}
& \multicolumn{4}{c}{\textbf{Confirmation Budget}} \\ 
\cmidrule(lr){2-5}
& 0\% & 10\% & 20\% & 50\% \\
\warblerinfo{} &
\addplot{coverage_accuracy_Warbler_0.0_withleg.pdf} & 
\addsideplot{coverage_accuracy_Warbler_0.1.pdf}  &
\addsideplot{coverage_accuracy_Warbler_0.2.pdf}  &
\addsideplot{coverage_accuracy_Warbler_0.5.pdf}  
\\ \midrule
\flycatcherinfo{} &
\addplot{coverage_accuracy_Flycatcher_0.0_withleg.pdf} & 
\addsideplot{coverage_accuracy_Flycatcher_0.1.pdf}  &
\addsideplot{coverage_accuracy_Flycatcher_0.2.pdf}  &
\addsideplot{coverage_accuracy_Flycatcher_0.5.pdf}  
\\ \midrule
\melanomainfo{} &
\addplot{coverage_accuracy_Melanoma_0.0_withleg.pdf} & 
\addsideplot{coverage_accuracy_Melanoma_0.1.pdf}  &
\addsideplot{coverage_accuracy_Melanoma_0.2.pdf}  &
\addsideplot{coverage_accuracy_Melanoma_0.5.pdf}   
\\ \midrule
\skincancerinfo{} &
\addplot{coverage_accuracy_SkinCancer_0.0_withleg.pdf} & 
\addsideplot{coverage_accuracy_SkinCancer_0.1.pdf}  &
\addsideplot{coverage_accuracy_SkinCancer_0.2.pdf}  &
\addsideplot{coverage_accuracy_SkinCancer_0.5.pdf}   
\\ \midrule
\syntheticlowinfo{} &
\addplot{coverage_accuracy_Synth25_0.0_withleg.pdf} & 
\addsideplot{coverage_accuracy_Synth25_0.1.pdf}  &
\addsideplot{coverage_accuracy_Synth25_0.2.pdf}  &
\addsideplot{coverage_accuracy_Synth25_0.5.pdf}
\\ \midrule
\synthetichighinfo{} &
\addplot{coverage_accuracy_Synth75_0.0_withleg.pdf} & 
\addsideplot{coverage_accuracy_Synth75_0.1.pdf}  &
\addsideplot{coverage_accuracy_Synth75_0.2.pdf}  &
\addsideplot{coverage_accuracy_Synth75_0.5.pdf} 
\end{tabular}
}
\caption{Accuracy coverage curves for all methods on all datasets. We include additional results in \cref{app::experiments} for multiclass tasks. Note that \textmethod{Baseline} (black) and \textmethod{Baseline + RandomConf} (green) produce identical results without confirmation. Likewise, \textmethod{CS + RandomConf} (blue) and \textmethod{CS + ImpactConf} (purple) are also equivalent under the same condition.}
\label{tab::overview_results}
\end{table}

\begin{table}[ht]
\centering
\resizebox{\textwidth}{!}{
\resizebox{\linewidth}{!}{
\centering\begingroup\fontsize{8}{10}\selectfont
\renewcommand{\predictionthresholds}[0]{\cell{l}{$\tau=0.05$\\ $\tau=0.1$\\ $\tau=0.15$\\ $\tau=0.2$}}
\begin{tabular}{llccccc}
Dataset & Prediction Thresholds & X->Y MLP & Baseline & CS & Baseline + RandomConf & CS + ImpactConf \\
\midrule
\warblerinfo & \predictionthresholds & \cell{r}{86.0\%\\ \cellcolor{fgood}100.00\%\\ \cellcolor{fgood}100.00\%\\ \cellcolor{fgood}100.00\%} & \cell{r}{17.3\%\\ 74.0\%\\ \cellcolor{fgood}100.00\%\\ \cellcolor{fgood}100.00\%} & \cell{r}{74.0\%\\ 87.3\%\\ \cellcolor{fgood}100.00\%\\ \cellcolor{fgood}100.00\%} & \cell{r}{30.0\%\\ 89.3\%\\ \cellcolor{fgood}100.00\%\\ \cellcolor{fgood}100.00\%} & \cell{r}{\cellcolor{fgood}90.00\%\\ \cellcolor{fgood}100.00\%\\ \cellcolor{fgood}100.00\%\\ \cellcolor{fgood}100.00\%} \\
\midrule
\flycatcherinfo & \predictionthresholds & \cell{r}{0.0\%\\ 82.7\%\\ 92.3\%\\ \cellcolor{fgood}100.00\%} & \cell{r}{26.9\%\\ 44.2\%\\ 44.2\%\\ 57.7\%} & \cell{r}{0.0\%\\ 36.5\%\\ 36.5\%\\ 51.9\%} & \cell{r}{46.2\%\\ 46.2\%\\ 65.4\%\\ \cellcolor{fgood}100.00\%} & \cell{r}{\cellcolor{fgood}84.62\%\\ \cellcolor{fgood}100.00\%\\ \cellcolor{fgood}100.00\%\\ \cellcolor{fgood}100.00\%} \\
\midrule
\melanomainfo & \predictionthresholds & \cell{r}{0.0\%\\ 0.0\%\\ 0.0\%\\ 0.0\%} & \cell{r}{0.0\%\\ 0.0\%\\ 0.0\%\\ 18.6\%} & \cell{r}{4.3\%\\ 14.3\%\\ 14.3\%\\ 34.3\%} & \cell{r}{8.6\%\\ 8.6\%\\ 41.4\%\\ 41.4\%} & \cell{r}{\cellcolor{fgood}52.86\%\\ \cellcolor{fgood}72.86\%\\ \cellcolor{fgood}100.00\%\\ \cellcolor{fgood}100.00\%} \\
\midrule
\skincancerinfo & \predictionthresholds & \cell{r}{0.0\%\\ 0.0\%\\ 32.9\%\\ \cellcolor{fgood}86.59\%} & \cell{r}{0.0\%\\ 0.0\%\\ 11.0\%\\ 11.0\%} & \cell{r}{0.0\%\\ 36.6\%\\ 36.6\%\\ 36.6\%} & \cell{r}{\cellcolor{fgood}6.10\%\\ 15.9\%\\ 28.0\%\\ 36.6\%} & \cell{r}{0.0\%\\ \cellcolor{fgood}39.02\%\\ \cellcolor{fgood}85.37\%\\ 85.4\%} \\
\midrule
\syntheticlowinfo{} & \predictionthresholds & \cell{r}{0.0\%\\ 32.3\%\\ 44.1\%\\ 80.4\%} & \cell{r}{0.0\%\\ 32.3\%\\ 43.6\%\\ 80.4\%} & \cell{r}{0.0\%\\ 32.3\%\\ 43.6\%\\ 80.4\%} & \cell{r}{0.0\%\\ 33.9\%\\ 45.8\%\\ 86.8\%} & \cell{r}{0.0\%\\ \cellcolor{fgood}44.66\%\\ \cellcolor{fgood}69.16\%\\ \cellcolor{fgood}93.31\%} \\
\midrule
\synthetichighinfo{} & \predictionthresholds & \cell{r}{0.0\%\\ 0.0\%\\ 0.0\%\\ 0.0\%} & \cell{r}{0.0\%\\ 0.0\%\\ 0.0\%\\ 0.0\%} & \cell{r}{0.0\%\\ 0.0\%\\ 0.0\%\\ 0.0\%} & \cell{r}{0.0\%\\ 0.0\%\\ 0.0\%\\ 0.0\%} & \cell{r}{0.0\%\\ \cellcolor{fgood}11.17\%\\ \cellcolor{fgood}11.17\%\\ \cellcolor{fgood}11.17\%} \\
\midrule
\end{tabular}
\endgroup{}
}

}
\label{tab::coverage_0.2_partial}
\caption{Coverage at specific thresholds $\threshold$ for a confirmation budget of 20\%. We present results for other datasets and confirmation budgets in~\cref{app::extraresults}.}
\end{table}

\subsection{Results}

We present the accuracy coverage curves for all methods and all datasets in \cref{tab::overview_results}. Given these curves, we can evaluate the gains to uncertainty propagation by comparing \textmethod{Baseline} to \textmethod{CS}, and the gains from confirmation by comparing \textmethod{Baseline + RandomConf} to \textmethod{CS + ImpactConf}). 

Overall, our results that our methods outperform their respective baselines in terms of coverage and selective accuracy. In general, we find that the gains vary across datasets and confirmation budgets. Given a desired target accuracy, for example, we achieve higher coverage on \textds{warbler} and \textds{flycatcher} rather than \textds{melanoma} and \textds{skincancer}. Such differences arise due to differences in the quality of concept annotations and their relevance for the prediction task at hand. 

\paragraph{On Uncertainty Propagation}

Our results highlight how uncertainty propagation can lead to improvements in a safeguard. On the one hand, we find that uncertainty propagation improves the accuracy coverage trade-off. On the \textds{skincancer} dataset, for example, we see a major improvement in the accuracy coverage curves between a model that propagates uncertainty (\textmethod{CS + RandomConf}, blue) and a model and a comparable model that does not (\textmethod{Baseline + RandomConf}, green). 

On the other hand, accounting for uncertainty can improve these trade-offs by producing a more effective confirmation policy. Our results highlight these effects by showing gains of uncertainty may change under a confirmation budget. On the  \textds{flycatcher} dataset, for example, accounting for uncertainty leads to little difference when we do not confirm examples (i.e., a 0\% confirmation budget). In a regime where we allow for confirmation -- setting a 20\% confirmation budget -- we find that accounting for uncertainty can increase coverage, from $46.2\%$ to $63.5\%$, when the threshold is set at $\threshold=0.05$ (see ~\cref{tab::coverage_0.2_partial}). 


\paragraph{On Confirmation}

Our results show that confirming concepts improves performance across all datasets. On \textds{flycatcher}, we find that confirming a random subset of instances 20\% improves coverage from 26.9\% to 46.2\% for a threshold of $\threshold=0.05$ (\textmethod{Baseline} $\to$ \textmethod{Baseline + RandomConf}). In 
a conceptual safeguard, coverage starts from 63.5\% due to uncertainty propagation (\textmethod{CS + RandConf}), and improves to 84.6\% as a result of our targetted confirmation policy (\textmethod{CS + ImpactConf}). The gains of confirmation depend on the underlying task and dataset. For example, the gains may be smaller when the concept detectors perform well enough without confirmation to achieve high accuracy (e.g., for the \textds{warbler} and \textds{noisyconcepts25} datasets). 

Our results highlight the value of targetted confirmation policy -- i.e., as a technique that can lead to meaningful gains in coverage without compromising safety or requiring real-time human supervision. In practice, these gains only part of the benefits of our approach -- as practitioners may be able to specify costs in a way that limits the experts required for confirmation. For example, non-expert users may be able to confirm concepts such as \textcp{WingColorRed}, while confirming the final species prediction to \textcp{RedFacedCormorant} may require greater expertise.

\section{Concluding Remarks}
\label{sec::concludingremarks}

Conceptual safeguards reflect a general-purpose approach to promote safety through selective classification. In applications where we wish to automate routine tasks that humans could perform, safegaurds allow us to reap some of the benefits of automation through abstention in a way that can promote interpretability and improve coverage. Although our work has primarily focused on binary classification tasks, our machinery can be applied to build conceptual safeguards for multiclass tasks (see \cref{app::multiclassexperiments}), and extended to other supervised prediction problems.


Our approach has a number of overarching limitations that affect concept bottlenecks and selective classifiers. As with all models trained with concept annotations, we expect to incur some loss in performance relative to an end-to-end model when concepts are unlikely to capture all relevant information about the label from its input. In practice, this gap may be large -- and we may reap the benefits of automation using a traditional selective classifier. As with other selective classification methods, we expect that abstention may exacerbate disparities in coverage or performance across subpopulations~\citep[see e.g.,][]{Jones2021Selective}. In our setting, it may be difficult to pin down the source of these disparities -- as they may arise from concept annotations, model concepts, or interaction effects. Nevertheless, we may be able to mitigate these effects through a suitable confirmation policy. 


\clearpage
\subsubsection*{Acknowledgments}
We thank Taylor Joren, Mert Yuksekgonul, and Lily Weng for helpful discussions. This work was supported by funding from the National Science Foundation Grants IIS-2040880 and IIS-2313105, the NIH Bridge2AI Center Grant U54HG012510, and an Amazon Research Award. 


\small
\bibliography{conceptual_safeguards.bib}


\clearpage
\appendix


\section{Omitted Proofs}
\label{app:proofs}
\SelectiveClassificationProp*
\begin{proof} We show that on examples for which the selective classifier does not abstain, accuracy is at least $1 - \threshold$. First, we use the Law of Total Probability to separate the cases where $\outcomepredproba$ is confident that $y = 0$ versus $y = 1$. 
\begin{align*}
        &\Parg{y = 
    \gate_\threshold(\outcomepredproba) \mid \gate_\threshold(\outcomepredproba) \neq \abstain} \\
     &= \Parg{\outcomepredproba \geq 1 - \threshold \mid \gate_\threshold(\outcomepredproba) \neq \abstain} \Parg{y = 
    1 \mid \outcomepredproba \geq 1 - \threshold} \\
    &\quad \quad + \Parg{\outcomepredproba \leq \threshold \mid \gate_\threshold(\outcomepredproba) \neq \abstain} \Parg{y = 
    0 \mid \outcomepredproba \leq \threshold} \\
    \intertext{Next, we use the definition of calibration, which states that $\Parg{y = 1 \mid \outcomepredproba = t} = t$ for all $t \in [0, 1]$, to write}
     &\geq 
    \Parg{\outcomepredproba \geq 1 - \threshold \mid \gate_\threshold(\outcomepredproba) \neq \abstain} 
    (1 - \threshold)  + 
    \Parg{\outcomepredproba \leq \threshold \mid \gate_\threshold(\outcomepredproba) \neq \abstain} 
    (1 - \threshold) \\
   &= (1 - \threshold)
\end{align*}
\end{proof}

\section{Supporting Material for Experiments}
\label{app::experiments}

\subsection{Datasets}
\label{Appendix::Datasets}


\paragraph{\textds{melanoma} \& \textds{skincancer}} These datasets are derived from the Derm7pt repository \cite{derm7pt}, which is de-identified and publicly available without patient information. We preprocess the dataset by splitting the original seven categorical image annotations (\textcp{pigment\_network, streaks, pigmentation, regression\_structures, dots\_and\_globules, blue\_whitish\_veil, vascular\_structures}) into seventeen binary concepts. 
We consider two tasks: predicting \textds{melanoma} 
and predicting \textds{skincancer} (melanoma or basal cell carcinoma).
We split the validation indices in the original dataset into a validation set and a hold-out test set for evaluation. We then balance the resulting classes by downsampling the majority class. To train the concept models, we augment the original training images with random color enhancements and random flipping to obtain 10x total training images. 

\paragraph{
\textds{warbler}, \textds{flycatcher}, \textds{cubspecies} \& \textds{cubtypes}} These datasets are derived from the CUB 2011 dataset \cite{cub}. We follow the same preprocessing described in \cite{koh2020concept}.
\textds{warbler} classifies birds of type warbler and \textds{flycatcher} classifies birds of type flycatcher. \textds{cubspecies} and \textds{cubtypes} are multiclass datasets for predicting bird species and bird types, respectively. To train the concept models, we augment the original training images with random color enhancements and random flipping to obtain 10x total training images. 

\paragraph{\textds{noisyconcepts}} 

The \textds{noisyconcepts} datasets are synthetic datasets that we primarily use to evaluate performance changes with respect to the quality of concept detectors. We sample the examples in these datasets $(\xb_i, \cb_i, y_i)$ from the following distribution:
\begin{align}\label{Eq::Synthetic}
\begin{split}
x_1,\ldots,x_5 &= \text{Bernouilli}(0.7)\\
\xi_1,\xi_2,\xi_3 &= \text{Bernouilli}(p_{\xi})\\
c_1 &= \text{parity}(x_1, x_2, x_4) \oplus \xi_1 \\
c_2 &= \text{parity}(x_1, x_2, x_3) \oplus \xi_2 \\
c_3 &= \text{parity}(x_1, x_2, x_5) \oplus \xi_3 \\
p &= \text{logistic}(1.0c_1 + 2.0c_2 + 3.0c_3 -2.0) \\
y &\sim \text{Bernoulli}(p)
\end{split}
\end{align}
The distribution in \eqref{Eq::Synthetic} includes an explicit noise parameter $p_\xi \in [0,1]$ that we can set to control the noise in concepts. When $p_\xi = 0$, the values of $c_1, c_2, c_3$ operate as parity functions, which can only be learned through a sufficiently complex model. When $p_\xi > 0$, we inject noise into the concept labels by randomly flipping the values of $c_1, c_2, c_3$ with probability $p_\xi$. Thus, the noise parameter sets an upper bound on the accuracy of concept labels -- and larger values of $p_\xi$ lead to less accurate concept detectors. In contrast to the real-world datasets, we train the concept detectors for the \textds{noisyconcepts} datasets directly (i.e., without an embedding layer) by fitting multi-layer perceptron with a single hidden layer. 

\clearpage
\subsection{Additional Experimental Results}
\label{app::extraresults}

\begin{table}[ht]
\resizebox{0.95\linewidth}{!}{
\resizebox{\linewidth}{!}{
\centering\begingroup\fontsize{8}{10}\selectfont
\renewcommand{\predictionthresholds}[0]{\cell{l}{$\tau=0.05$\\ $\tau=0.1$\\ $\tau=0.15$\\ $\tau=0.2$}}
\begin{tabular}{llccccc}
Dataset & Prediction Thresholds & X->Y MLP & Baseline & CS & Baseline + RandomConf & CS + ImpactConf \\
\midrule
\warblerinfo & \predictionthresholds & \cell{r}{\cellcolor{fgood}86.00\%\\ \cellcolor{fgood}100.00\%\\ \cellcolor{fgood}100.00\%\\ \cellcolor{fgood}100.00\%} & \cell{r}{17.3\%\\ 74.0\%\\ \cellcolor{fgood}100.00\%\\ \cellcolor{fgood}100.00\%} & \cell{r}{74.0\%\\ 87.3\%\\ \cellcolor{fgood}100.00\%\\ \cellcolor{fgood}100.00\%} & \cell{r}{26.0\%\\ 86.7\%\\ 97.3\%\\ \cellcolor{fgood}100.00\%} & \cell{r}{85.3\%\\ \cellcolor{fgood}100.00\%\\ \cellcolor{fgood}100.00\%\\ \cellcolor{fgood}100.00\%} \\
\midrule
\flycatcherinfo & \predictionthresholds & \cell{r}{0.0\%\\ 82.7\%\\ 92.3\%\\ \cellcolor{fgood}100.00\%} & \cell{r}{26.9\%\\ 44.2\%\\ 44.2\%\\ 57.7\%} & \cell{r}{0.0\%\\ 36.5\%\\ 36.5\%\\ 51.9\%} & \cell{r}{38.5\%\\ 38.5\%\\ 38.5\%\\ 96.2\%} & \cell{r}{\cellcolor{fgood}100.00\%\\ \cellcolor{fgood}100.00\%\\ \cellcolor{fgood}100.00\%\\ \cellcolor{fgood}100.00\%} \\
\midrule
\melanomainfo & \predictionthresholds & \cell{r}{0.0\%\\ 0.0\%\\ 0.0\%\\ 0.0\%} & \cell{r}{0.0\%\\ 0.0\%\\ 0.0\%\\ 18.6\%} & \cell{r}{4.3\%\\ 14.3\%\\ 14.3\%\\ 34.3\%} & \cell{r}{4.3\%\\ 4.3\%\\ 21.4\%\\ 21.4\%} & \cell{r}{\cellcolor{fgood}38.57\%\\ \cellcolor{fgood}38.57\%\\ \cellcolor{fgood}100.00\%\\ \cellcolor{fgood}100.00\%} \\
\midrule
\skincancerinfo & \predictionthresholds & \cell{r}{0.0\%\\ 0.0\%\\ 32.9\%\\ 86.6\%} & \cell{r}{0.0\%\\ 0.0\%\\ 11.0\%\\ 11.0\%} & \cell{r}{0.0\%\\ 36.6\%\\ 36.6\%\\ 36.6\%} & \cell{r}{\cellcolor{fgood}3.66\%\\ 13.4\%\\ 13.4\%\\ 30.5\%} & \cell{r}{0.0\%\\ \cellcolor{fgood}37.80\%\\ \cellcolor{fgood}37.80\%\\ \cellcolor{fgood}93.90\%} \\
\midrule
\syntheticlowinfo{} & \predictionthresholds & \cell{r}{0.0\%\\ 32.3\%\\ 44.1\%\\ 80.4\%} & \cell{r}{0.0\%\\ 32.3\%\\ 43.6\%\\ 80.4\%} & \cell{r}{0.0\%\\ 32.3\%\\ 43.6\%\\ 80.4\%} & \cell{r}{0.0\%\\ 33.2\%\\ 44.8\%\\ \cellcolor{fgood}84.00\%} & \cell{r}{0.0\%\\ \cellcolor{fgood}39.77\%\\ \cellcolor{fgood}65.35\%\\ 82.7\%} \\
\midrule
\synthetichighinfo{} & \predictionthresholds & \cell{r}{0.0\%\\ 0.0\%\\ 0.0\%\\ 0.0\%} & \cell{r}{0.0\%\\ 0.0\%\\ 0.0\%\\ 0.0\%} & \cell{r}{0.0\%\\ 0.0\%\\ 0.0\%\\ 0.0\%} & \cell{r}{0.0\%\\ 0.0\%\\ 0.0\%\\ 0.0\%} & \cell{r}{0.0\%\\ 0.0\%\\ \cellcolor{fgood}2.38\%\\ \cellcolor{fgood}2.38\%} \\
\midrule
\end{tabular}
\endgroup{}
}

}
\label{tab::coverage_0.1}
\caption{Coverage across abstention thresholds $\tau$ with confirmation budget 10\%}
\end{table}

\begin{table}[hb]
\resizebox{0.95\linewidth}{!}{
\resizebox{\linewidth}{!}{
\centering\begingroup\fontsize{8}{10}\selectfont
\renewcommand{\predictionthresholds}[0]{\cell{l}{$\tau=0.05$\\ $\tau=0.1$\\ $\tau=0.15$\\ $\tau=0.2$}}
\begin{tabular}{llcccc}
Dataset & Prediction Thresholds & Baseline & Baseline + RandomConf & CS + RandomConf & CS + ImpactConf \\
\midrule
\warblerinfo & \predictionthresholds & \cell{r}{17.3\%\\ 74.0\%\\ \cellcolor{fgood}100.00\%\\ \cellcolor{fgood}100.00\%} & \cell{r}{84.7\%\\ 93.3\%\\ \cellcolor{fgood}100.00\%\\ \cellcolor{fgood}100.00\%} & \cell{r}{83.3\%\\ \cellcolor{fgood}100.00\%\\ \cellcolor{fgood}100.00\%\\ \cellcolor{fgood}100.00\%} & \cell{r}{\cellcolor{fgood}86.67\%\\ 92.7\%\\ \cellcolor{fgood}100.00\%\\ \cellcolor{fgood}100.00\%} \\
\midrule
\flycatcherinfo & \predictionthresholds & \cell{r}{26.9\%\\ 44.2\%\\ 44.2\%\\ 57.7\%} & \cell{r}{67.3\%\\ 76.9\%\\ 90.4\%\\ \cellcolor{fgood}100.00\%} & \cell{r}{82.7\%\\ 86.5\%\\ \cellcolor{fgood}100.00\%\\ \cellcolor{fgood}100.00\%} & \cell{r}{\cellcolor{fgood}100.00\%\\ \cellcolor{fgood}100.00\%\\ \cellcolor{fgood}100.00\%\\ \cellcolor{fgood}100.00\%} \\
\midrule
\cubcommoninfo & \predictionthresholds & \cell{r}{2.2\%\\ 12.5\%\\ 22.4\%\\ 22.4\%} & \cell{r}{10.7\%\\ 19.7\%\\ 69.8\%\\ 80.8\%} & \cell{r}{17.0\%\\ 51.6\%\\ 81.0\%\\ 81.0\%} & \cell{r}{\cellcolor{fgood}100.00\%\\ \cellcolor{fgood}100.00\%\\ \cellcolor{fgood}100.00\%\\ \cellcolor{fgood}100.00\%} \\
\midrule
\cubrareinfo & \predictionthresholds & \cell{r}{0.0\%\\ 11.4\%\\ 11.4\%\\ 31.9\%} & \cell{r}{3.0\%\\ 18.5\%\\ 67.4\%\\ 67.4\%} & \cell{r}{14.5\%\\ 28.4\%\\ 77.5\%\\ 77.5\%} & \cell{r}{\cellcolor{fgood}100.00\%\\ \cellcolor{fgood}100.00\%\\ \cellcolor{fgood}100.00\%\\ \cellcolor{fgood}100.00\%} \\
\midrule
\melanomainfo & \predictionthresholds & \cell{r}{0.0\%\\ 0.0\%\\ 0.0\%\\ 18.6\%} & \cell{r}{18.6\%\\ 30.0\%\\ 51.4\%\\ 75.7\%} & \cell{r}{44.3\%\\ 62.9\%\\ \cellcolor{fgood}100.00\%\\ \cellcolor{fgood}100.00\%} & \cell{r}{\cellcolor{fgood}72.86\%\\ \cellcolor{fgood}80.00\%\\ \cellcolor{fgood}100.00\%\\ \cellcolor{fgood}100.00\%} \\
\midrule
\skincancerinfo & \predictionthresholds & \cell{r}{0.0\%\\ 0.0\%\\ 11.0\%\\ 11.0\%} & \cell{r}{0.0\%\\ 28.0\%\\ 56.1\%\\ 84.1\%} & \cell{r}{\cellcolor{fgood}30.49\%\\ 42.7\%\\ 85.4\%\\ 85.4\%} & \cell{r}{0.0\%\\ \cellcolor{fgood}48.78\%\\ \cellcolor{fgood}100.00\%\\ \cellcolor{fgood}100.00\%} \\
\midrule
\syntheticlowinfo{} & \predictionthresholds & \cell{r}{0.0\%\\ 32.3\%\\ 43.6\%\\ 80.4\%} & \cell{r}{0.0\%\\ 36.0\%\\ 65.6\%\\ 91.6\%} & \cell{r}{11.0\%\\ \cellcolor{fgood}41.99\%\\ 64.1\%\\ 92.8\%} & \cell{r}{\cellcolor{fgood}19.75\%\\ 38.7\%\\ \cellcolor{fgood}78.19\%\\ \cellcolor{fgood}100.00\%} \\
\midrule
\synthetichighinfo{} & \predictionthresholds & \cell{r}{0.0\%\\ 0.0\%\\ 0.0\%\\ 0.0\%} & \cell{r}{0.0\%\\ 0.0\%\\ 0.0\%\\ 0.0\%} & \cell{r}{\cellcolor{fgood}1.67\%\\ 15.8\%\\ 15.8\%\\ 15.8\%} & \cell{r}{0.0\%\\ \cellcolor{fgood}29.55\%\\ \cellcolor{fgood}29.55\%\\ \cellcolor{fgood}68.25\%} \\
\midrule
\end{tabular}
\endgroup{}
}

}
\caption{Coverage across abstention thresholds $\tau$ with confirmation budget 50\%}
\label{tab::coverage_0.5}
\end{table}

\clearpage
\subsection{Multiclass Classification Tasks}
\label{app::multiclassexperiments}

In this Appendix, we briefly describe how to build conceptual safeguards for multiclass classification tasks and present experimental results for this setting. 

In practice, the main requirement for adapting uncertainty propagation to cover multiple labels. In practice, this requires replacing the concept prediction vector with a matrix that encodes $\Pr{y | \cb}$ for all $y \in \Y$ and $\cb \in \{0, 1\}^m$. For the selective classifier $\gate_\threshold$, we estimate uncertainty based on the likelihood of the most probable class and threshold prediction accordingly.

\renewcommand{\addplot}[1]{\adjustbox{valign=c}{%
\includegraphics[trim={0 0 0 0.25in},clip, height=0.2\textheight]{figures/2024/vjan/tables/subfigures/#1}}}

\renewcommand{\addsideplot}[1]{\adjustbox{valign=c}{%
\includegraphics[trim={0.25in 0 0 0.25in},clip, height=0.2\textheight]{figures/2024/vjan/tables/subfigures/#1}}}

\begin{table}[h]
\centering
\setlength{\tabcolsep}{0pt}
\resizebox{\linewidth}{!}{
\begin{tabular}{lcccc}
& \multicolumn{4}{c}{\textbf{Confirmation Budget}} \\ 
\cmidrule(lr){2-5}
& 0\% & 10\% & 20\% & 50\% \\
\cubtypesinfo{} &
\addplot{coverage_accuracy_CUBTypes_0.0_withleg.pdf} & 
\addsideplot{coverage_accuracy_CUBTypes_0.1.pdf}  &
\addsideplot{coverage_accuracy_CUBTypes_0.2.pdf}  &
\addsideplot{coverage_accuracy_CUBTypes_0.5.pdf}  
\end{tabular}
}
\caption{Coverage vs. accuracy for all methods on multiclass classification tasks.}
\label{tab::overview_results_multiclass}
\end{table}

\end{document}